\pdfoutput=1
\documentclass{article}

\usepackage{microtype}
\usepackage{subfigure}
\usepackage{booktabs} 
\usepackage{hyperref}



\usepackage[accepted]{icml2019}

\usepackage{graphicx} 


\usepackage{booktabs} 
\usepackage{paralist} 
\usepackage{verbatim} 
\usepackage{amsmath}
\usepackage{amsfonts}
\usepackage{amsthm}
\usepackage{array}
\usepackage{graphicx}

\usepackage{bm}





\theoremstyle{plain}

\newtheorem{prop}{Proposition}

\newtheorem{theorem}{Theorem}
\newtheorem{corollary}{Corollary}

\icmltitlerunning{Learning Models from Data with Measurement Error: Tackling Underreporting}

\begin{document}

\twocolumn[
\icmltitle{Learning Models from Data with Measurement Error:\\Tackling Underreporting}



\icmlsetsymbol{equal}{*}

\begin{icmlauthorlist}
\icmlauthor{Roy Adams}{jhucs}
\icmlauthor{Yuelong Ji}{jhuph}
\icmlauthor{Xiaobin Wang}{jhuph}
\icmlauthor{Suchi Saria}{jhucs,jhust,bh}
\end{icmlauthorlist}

\icmlaffiliation{jhucs}{Department of Computer Science, Johns Hopkins University}
\icmlaffiliation{jhust}{Department of Applied Math and Statistics, Johns Hopkins University}
\icmlaffiliation{bh}{Bayesian Health}
\icmlaffiliation{jhuph}{Center on the Life Origins of Disease, Department of Population, Family, and Reporductive Health, Johns Hopkins University Bloomberg School of Public Health}

\icmlcorrespondingauthor{Roy Adams}{roy.james.adams@gmail.com}
\icmlsecondcorrespondingauthor{Suchi Saria}{ssaria@cs.jhu.edu}

\icmlkeywords{measurement error; label noise; weakly supervised learning}

\vskip 0.3in
]



\printArxivAffiliations{}
\begin{abstract}
Measurement error in observational datasets can lead to systematic bias in inferences based on these datasets. As studies based on observational data are increasingly used to inform decisions with real-world impact, it is critical that we develop a robust set of techniques for analyzing and adjusting for these biases. In this paper we present a method for estimating the distribution of an outcome given a binary exposure that is subject to underreporting. Our method is based on a missing data view of the measurement error problem, where the true exposure is treated as a latent variable that is marginalized out of a joint model. We prove three different conditions under which the outcome distribution can still be identified from data containing only error-prone observations of the exposure. We demonstrate this method on synthetic data and analyze its sensitivity to near violations of the identifiability conditions. Finally, we use this method to estimate the effects of maternal smoking and opioid use during pregnancy on childhood obesity, two import problems from public health. Using the proposed method, we estimate these effects using only subject-reported drug use data and substantially refine the range of estimates generated by a sensitivity analysis-based approach. Further, the estimates produced by our method are consistent with existing literature on both the effects of maternal smoking and the rate at which subjects underreport smoking.
\end{abstract}

\section{Introduction}
\label{sec:introduction}
Measurement error in observational datasets can lead to systematic bias in inferences based these datasets. As studies using observational data are increasingly used to inform decisions with real-world impact, it is critical that we develop a robust set of techniques for analyzing and adjusting for these biases. Bias can mean different things in different contexts, but here we use it in the statistical sense to refer to inferences that are wrong in a non-random way \cite{casella2002statistical}.
Basing decisions on biased inferences can lead to real consequences and degrade trust in the use of observational data to inform decision making. For example, decisions about what resources should be allocated to preventing maternal drug use during pregnancy must be based on studies that rely on subject-reported drug use behaviors \cite{wang2002maternal}.
Analyzing and accounting for various potential sources of bias remains an open problem, but one with important real-world implications.

One reason that accounting for bias in observational data is a difficult problem is that it requires making further assumptions about what the sources and magnitudes of various biases might be. There is often substantial debate about the validity of these assumptions (e.g. \citet{knape2015estimates, solymos2016revisiting, knape2016assumptions}). As a result, it is frequently the case that observational studies relegate potential sources of bias to the discussion of limitations rather than performing quantitative bias analysis \cite{rothman2008modern}. It is our view that, to the extent possible, quantitative bias analysis should be presented alongside quantitative measures of uncertainty when presenting results from observational studies. It is therefore critical that we develop a robust set of tools for analyzing and accounting for observational bias. 

In this work we focus on bias caused by measurement error, the degree and direction of which depends heavily on the type of error \cite{carroll2006measurement,gustafson2003measurement}.
There are three common approaches for dealing with measurement error (more details on these approaches are provided in Section \ref{sec:background}). 
The first approach assumes that we have access to a data source containing error-free measurements (referred to as \emph{validation data}). This data can be used to estimate the distribution of errors in order to adjust appropriately when error-free measurements are not available. When validation data is available, this approach is clearly preferable, but such data is often difficult or impossible to gather. 

The second approach assumes that we can specify a small set of hypothetical error distributions which are then used to generate a corresponding range of inferences \cite{rothman2008modern}.
This set of hypothetical distributions can be specified using either domain knowledge or previous studies; however, if such knowledge is not available, then this type of sensitivity analysis may generate a wide range of inferences.

In the third approach, the error-free measurements are treated as unobserved variables, a model is specified that includes these variables, and inferences are made using only the observed error-prone data. If the modeling assumptions are correct, then this approach can give unbiased inferences without relying on validation data or specific knowledge about the amount of error present in the data.

In this work, we propose a method based on the third approach to account for a type of measurement error called \emph{exposure misclassification}. Exposure misclassification occurs when we are interested in estimating the distribution of an outcome $Y$ given a binary exposure $A$, but $A$ is subject to measurement error\footnote{Measurement error in discrete variables is referred to as \emph{misclassification}.}. For example, when estimating the effect of maternal drug use (exposure) on childhood obesity (outcome) using survey data, subjects have a tendency to \emph{underreport} (source of error) whether or not they have used drugs.

The primary contribution of this work is a method for estimating the outcome distribution when we are only able to observe a version of the exposure that is subject to underreporting. Underreporting is common in problems involving survey-based observations of sensitive behaviors and, to our knowledge, this is the first method that allows unbiased estimation directly from such data. We prove three different assumptions under which such estimation is possible (Section \ref{sec:identifiability}), enabling flexible application of our method to different problem settings.

Using synthetic data, we show that this approach substantially reduces estimation error compared with treating the error-prone observations as ground truth (Section \ref{sec:synth}). Finally, we use this approach to estimate the effects of maternal smoking and opioid use during pregnancy on childhood obesity using subject-reported drug use data. The effects of childhood obesity later in life can be severe and identifying potential causes of childhood obesity has significant public health implications.
Due to underreporting error, obtaining unbiased estimates of these effects directly from survey data has not been possible and, to our knowledge, the contributions made in this article enabled the first reported estimate of the effect of maternal opioid use on childhood obesity.
Estimates produced by the proposed method substantially refine the range of estimates generated by a sensitivity analysis based approach and are consistent with existing literature on the effects of maternal smoking and the rate at which subjects underreport smoking (Section \ref{sec:bbc}).

\section{Background}
\label{sec:background}
Measurement error is encountered in a wide range of settings and there is extensive literature on various adjustment techniques. In this section, we contextualize our contributions with a review of the measurement error bias problem and some of the approaches to adjusting for this bias. We encourage the interested reader to seek out \citet{carroll2006measurement} or \citet{gustafson2003measurement} for full treatments of this topic.

\subsection{Measurement error bias}
Suppose that we are interested in estimating the distribution of response $Y$ given predictor $A$, denoted $p(Y|A)$. We may do this because we are interested in the parameters of this distribution or because we would like to predict $Y$ from future values of $A$. However, suppose that rather than observing $A$ in our training data, we observe $\tilde{A}$, a version of $A$ that is corrupted by measurement error. In cases where $p(Y|A) \neq p(Y|\tilde{A})$, ignoring the measurement error in $\tilde{A}$ may lead us to a biased estimate of $p(Y|A)$. 
The specific effect using $\tilde{A}$ has on our estimate will depend on how $\tilde{A}$ relates to $A$ and we will generally have to make some modeling assumptions about this relationship in order to adjust for it. For example, classical models for measurement error assume that $\tilde{A}$ equals $A$ plus a noise variable that is independent of $A$ \cite{carroll2006measurement}; however, these assumptions are not appropriate in many scenarios, including the one considered in this paper. 
Given a model for the error process, there are a number of ways we might go about adjusting for the bias caused by measurement errors.

\subsection{Adjusting for measurement error bias}
As discussed in the introduction, methods for adjusting for measurement error bias fall into three broad categories and choosing which method is appropriate for a particular problem will depend on what data and domain knowledge is available. 

\paragraph{Auxiliary data:} In the first category, it is assumed that some form of auxiliary data is available that allows us to estimate the error distribution directly. This auxiliary data may be validation data for which ground truth is available or it may be a second error-prone measurement of the same underlying variable. In the case of validation data, it is clear that the error distribution can be estimated; however, if the auxiliary data also contains errors, assumptions about the type and relationship of these errors are necessary to guarantee that the error distribution is estimable. For example, in Section \ref{sec:aux_meas} we prove for our model that the error distribution is estimable from two error-prone measurements which are independent given the true exposure. Once the error distribution is estimated, there are a number of ways one can correct for the missing ground truth observations such as imputation or sampling \cite{carroll2006measurement}. Using auxiliary data typically requires the least restrictive modeling assumptions and is therefore preferable when such data is available. It is unsurprising then that this approach forms the basis for much of the classic literature on measurement error \cite{carroll2006measurement,gustafson2003measurement} and is used in modern approaches when possible (e.g. \citet{pearl2012measurement}). In this work, however, we consider the case when no auxiliary data is available and these methods do not apply.

\paragraph{Sensitivity analysis:} When we cannot directly estimate the error distribution from data, we can instead consider a range of hypothetical error distributions. If this range is small, then we can simply enumerate them, correcting for the missing ground truth observations as above, to generate a corresponding range of plausible inferences. 
This approach does not rely on any auxiliary data and can be applied to the cases we consider in this paper; however, if we do not have sufficient domain knowledge to specify a small set of error distributions, the resulting range of inferences may be quite large (as we demonstrate in Section \ref{sec:bbc}).

\paragraph{Full likelihood:} A third and less common approach is to specify a joint model for the target $Y$, the unobserved ground-truth predictor $A$, and the error-prone measurement $\tilde{A}$ and then marginalize $A$ out of this joint model. This is often referred to as the \emph{full likelihood} approach \cite{carroll2006measurement,thomas1993exposure}. Even if correctly specified, the parameters of this joint model cannot, in general, be estimated from the observed data without further assumptions about the structure of the joint distribution; however, there has been some work to identify such cases. For example, \citet{rudemo1989random} show that a particular class of non-linear medication dose response models can be estimated even if the true dose is subject to measurement error. \citet{kuchenhoff1997segmented} show that a class of threshold regression models can be estimated from data with additive observation error in the predictors. The method presented in this paper is an example of the full likelihood approach applied the exposure misclassification problem.

\section{Model}
\label{sec:model}
\begin{figure}[t!]
\vskip 0.2in
\begin{center}
\centerline{\includegraphics[width=0.5\columnwidth]{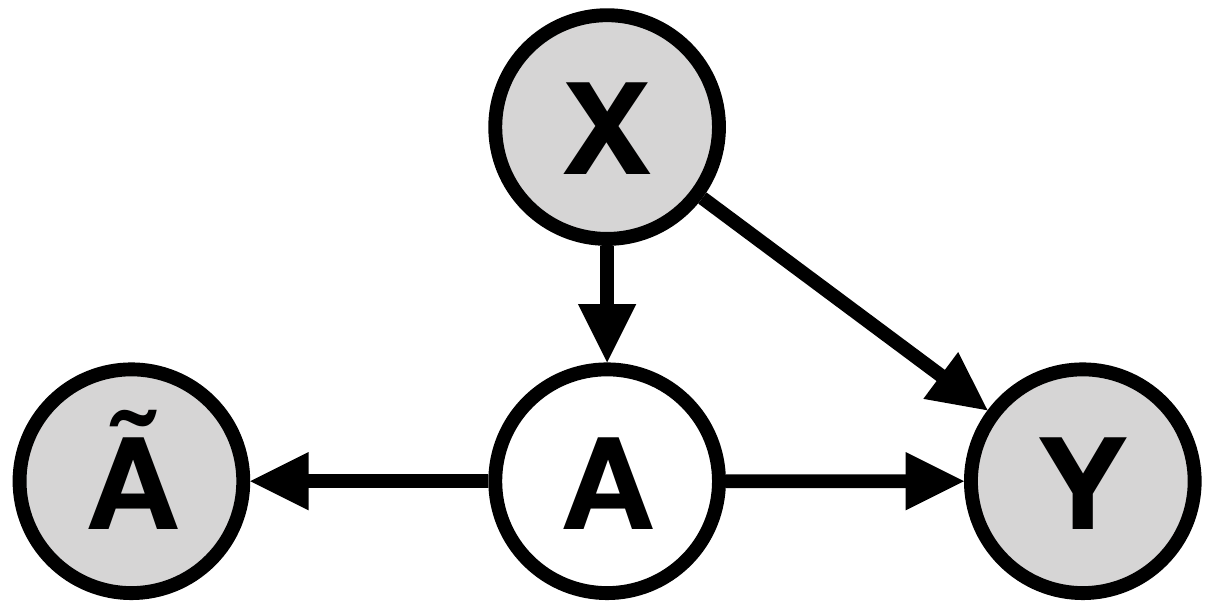}}
\caption{Proposed graphical model. $Y$ is the outcome of interest, $A$ is the true binary exposure, $\tilde{A}$ is the observed exposure, and $X$ is a vector of covariates. Grey variables are observed and white variables are not.}
\label{fig:graphical_model}
\end{center}
\vskip -0.2in
\end{figure}

Suppose that we are interested in estimating the conditional distribution of a binary outcome $Y\in\{0,1\}$ given a binary exposure binary exposure $A \in \{0,1\}$ and a set of known confounding variables $X\in\mathbb{R}^d$\footnote{Under the assumptions of consistency, positivity, and conditional exchangeability, this distribution could be used to estimate the causal effect of $A$ on $Y$ \cite{hernan2018causal}.}; however, rather than observing $A$, we observe an error-prone version $\tilde{A} \in \{0,1\}$. For example, $A$ might represent true drug use while $\tilde{A}$ represents subject-reported drug use. For reasonably high-dimensional covariates $X$, this is typically done using a parametric \emph{outcome model} $p_\theta(Y|A,X)$ which, if we were able to observe $A$, could be estimated directly from data. Instead, our goal is to estimate $p_\theta(Y|A,X)$ from a dataset $\mathcal{D}=\{(x_i,\tilde{a}_i,y_y)\}_{i=1}^N$ consisting only of the outcome, covariates, and error-prone exposure observation. 

In order to make this possible, we make two simplifying assumptions. First, we assume that the observed exposure $\tilde{A}$ is independent of the outcome and covariates given the true exposure or $\tilde{A} \perp X,Y | A$. In our drug use example, this means that the decision to misreport drug use is independent of known confounders such as age and income given the ground truth. When the conditional independence with $Y$ holds, the measurement error is referred to as \emph{non-differential}. This assumption is illustrated in the graphical model shown in Figure \ref{fig:graphical_model}. 

Second, we assume strict underreporting of the true exposure or $p(\tilde{A}=1|A=0) = 0$. For example, this assumption might encode our belief the non-users will not falsely report that they use drugs. Given these assumptions, we can write the conditional probability of $Y$ and $\tilde{A}$ given $X$ as\footnote{When it does not cause confusion, we simplify $p(\tilde{A}=\tilde{a}|A=a)$ to $p(\tilde{a}|a)$.}:
\begin{align*}
	p(y,\tilde{a}|x) = \sum_a p(\tilde{a}|a) p(a|x) p(y|a,x)
\end{align*}
where $\tau$ represents the underreporting rate. Finally, we will assume that the modeler has selected a parametric \emph{propensity model} $p_\phi(a|x) \in \mathcal{P}_\Phi$ and a parametric \emph{outcome model} $p_\theta(y|a,x) \in \mathcal{P}_\Theta$ for $p(a|x)$ and $p(y|a,x)$ respectively. We parameterize the \emph{error model} $p_\tau(\tilde{a}|a)$ as:
\begin{align*}
	p_{\tau}(\tilde{a}=0|a=1) &= 1 - p_{\tau}(\tilde{a}=1|a=1) = \tau\\
	p_{\tau}(\tilde{a}=0|a=0) &= 1 - p_{\tau}(\tilde{a}=0|a=0) = 1
\end{align*}
We can estimate these parameters jointly by maximizing the following log conditional likelihood:
\begin{align}
	\label{eq:lml}
	\mathcal{L}(\tau,\phi,\theta) = \sum_i \log \sum_a p_\tau(\tilde{a}_i|a) p_\phi(a|x_i) p_\theta(y_i|a,x_i)
\end{align}
Even in the case where $\phi$ and $\theta$ could be estimated from data containing the true exposure, we may not be able to estimate $\tau$, $\phi$, and $\theta$ from the available data without further assumptions. In the next section, we prove several such conditions under which the parameters of this model are estimable.
%
%
%
%
%

\section{Identifiability}
\label{sec:identifiability}
We prove three conditions under which the joint model $p_{\tau,\phi,\theta}(y,\tilde{a}|x)$ is identifiable\footnote{A parametric model $p_w\in\mathcal{P}_\mathcal{W}$ is \emph{identifiable} if $p_w=p_{w'} \implies w=w'$ for all $w,w'\in\mathcal{W}$. This is equivalent to saying that, in the limit of infinite data from $p_w$, we could identify the parameters that gave rise to this data. Importantly, this is a property of a model and is independent of the data.}. These results allow us to use the model from Section \ref{sec:model} for each of the three bias analysis and adjustment approaches described in Section \ref{sec:background}: sensitivity analysis, auxiliary data, and full likelihood.

\subsection{Known $\tau$}

The validation data and sensitivity analysis approaches are based on the idea that, when $p_\tau(\tilde{a}|a)$ is known, $p_\theta(y|a,x)$ is estimable from data containing measurement error. We prove that this is true for the model presented in Section \ref{sec:model} allowing us to use this model to perform sensitivity analysis by fixing $\tau$ at various values and estimating $\phi$ and $\theta$ from the observed data. Specifically, we have the following result:
\begin{prop}
	\label{prop:ident1}
	If $\tau$ is known and $p_{\phi,\theta}(y,a|x)$ is identifiable, then $p_{\tau,\phi,\theta}(y,\tilde{a}|x)$ is identifiable.
\end{prop}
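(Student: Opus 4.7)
The plan is to show directly that, under the hypothesis $\tau$ is known and $\tau<1$, the map from $(\phi,\theta)$ to the observed conditional $p_{\tau,\phi,\theta}(y,\tilde{a}\mid x)$ factors through $p_{\phi,\theta}(y,a\mid x)$ via an invertible linear transformation whose coefficients depend only on the known $\tau$. Once that is established, the given identifiability of $p_{\phi,\theta}(y,a\mid x)$ transfers to $p_{\tau,\phi,\theta}(y,\tilde{a}\mid x)$ immediately.

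Concretely, I would start by expanding the two cases $\tilde{a}=1$ and $\tilde{a}=0$ using the strict underreporting parametrization given in Section~\ref{sec:model}. This yields
\begin{align*}
    p_{\tau,\phi,\theta}(y,\tilde{a}{=}1\mid x) &= (1-\tau)\, p_{\phi,\theta}(y,a{=}1\mid x),\\
    p_{\tau,\phi,\theta}(y,\tilde{a}{=}0\mid x) &= p_{\phi,\theta}(y,a{=}0\mid x) + \tau\, p_{\phi,\theta}(y,a{=}1\mid x).
\end{align*}
The key observation is that the coefficient matrix $\bigl(\begin{smallmatrix}0 & 1-\tau \\ 1 & \tau\end{smallmatrix}\bigr)$ mapping $(p(y,a{=}0\mid x),p(y,a{=}1\mid x))$ to $(p(y,\tilde{a}{=}0\mid x),p(y,\tilde{a}{=}1\mid x))$ has determinant $-(1-\tau)$, which is nonzero exactly when $\tau<1$. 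Solving, one recovers
\begin{align*}
    p_{\phi,\theta}(y,a{=}1\mid x) &= \tfrac{1}{1-\tau}\, p_{\tau,\phi,\theta}(y,\tilde{a}{=}1\mid x),\\
    p_{\phi,\theta}(y,a{=}0\mid x) &= p_{\tau,\phi,\theta}(y,\tilde{a}{=}0\mid x) - \tfrac{\tau}{1-\tau}\, p_{\tau,\phi,\theta}(y,\tilde{a}{=}1\mid x).
\end{align*}

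To finish, suppose $p_{\tau,\phi,\theta}(y,\tilde{a}\mid x) = p_{\tau,\phi',\theta'}(y,\tilde{a}\mid x)$ for all $y,\tilde{a},x$. Applying the inversion above (using the common, known $\tau$) gives $p_{\phi,\theta}(y,a\mid x) = p_{\phi',\theta'}(y,a\mid x)$ for all $y,a,x$, and then the hypothesized identifiability of $p_{\phi,\theta}(y,a\mid x)$ forces $\phi=\phi'$ and $\theta=\theta'$, which is the desired conclusion.

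The main (and really only) obstacle is the boundary case $\tau=1$, where the coefficient matrix becomes singular and every observation collapses to $\tilde{a}=0$, so no information about $(\phi,\theta)$ survives. I would flag this by noting the proposition implicitly requires $\tau<1$ (i.e., not every exposed subject underreports), which is the natural regime for any meaningful sensitivity analysis. Beyond that, the argument is a one-line linear inversion plus invocation of the assumed identifiability of the complete-data model, so no further technical machinery is needed.
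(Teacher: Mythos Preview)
Your argument is correct and is essentially the same as the paper's: both show that, for known $\tau<1$, the map $p_{\phi,\theta}(y,a\mid x)\mapsto p_{\tau,\phi,\theta}(y,\tilde a\mid x)$ is an invertible linear transformation (with coefficients depending only on $\tau$), so identifiability transfers directly. Your version simply spells out the $2\times 2$ matrix and its inverse explicitly, whereas the paper asserts invertibility in one sentence; the only cosmetic slip is that the row order of your matrix $\bigl(\begin{smallmatrix}0 & 1-\tau \\ 1 & \tau\end{smallmatrix}\bigr)$ corresponds to output order $(\tilde a{=}1,\tilde a{=}0)$ rather than the stated $(\tilde a{=}0,\tilde a{=}1)$, but this does not affect the determinant or the (correct) inversion formulas.
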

\begin{proof}
By the identifiability of $p_{\phi,\theta}(y,a|x) = p_\theta(y|a,x)p_\phi(a|x)$, $(\phi,\theta) = (\phi',\theta')$ if and only if $p_{\phi,\theta}(y,a|x) = p_{\phi',\theta'}(y,a|x)$. Then for any $x$, $y$, and $\tau < 1$, marginalizing over $A$ to map from $p_{\phi,\theta}(y,a|x)$ to $p_{\phi,\theta}(y,\tilde{a}|x)$ is an invertible linear map, so
$p_\theta(y,\tilde{a}|x) = p_{\theta'}(y,\tilde{a}|x)$ if and only if $p_\theta(y,a|x) = p_{\theta'}(y,a|x)$.
\end{proof}
Unfortunately, it is frequently the case that we do not know $\tau$ and cannot specify a reasonably tight range of possible values, so we must estimate it from data.

\subsection{Multiple error-prone observations}
\label{sec:aux_meas}

Next, we consider the case where $\tau$ is unknown, but we have access to a second error-prone exposure observation. For example, we may have access to both subject-reported drug use and lab measurements of biomarkers for drug use. We assume the second observation is also subject to the conditional independence and underreporting assumptions described in Section \ref{sec:model}. In this scenario, both $\tilde{A}$ and $\tau$ are two dimensional vectors. In general, we cannot estimate $\tau$, $\phi$, and $\theta$ from the available data without further assumptions about $\tilde{A}$. One such assumption is that the two observations, $\tilde{A}_1$ and $\tilde{A}_2$, are conditionally independent given the true exposure $A$. When such data is available, we can use this result to estimate $\tau$, $\phi$, and $\theta$ with out any knowledge of the value of $\tau$.
\begin{prop}
	\label{prop:ident3}
	If $\tilde{A}_1 \perp \tilde{A}_2 | A$ and $p_{\phi,\theta}(y,a|x)$ is identifiable, then $p_{\tau,\phi,\theta}(y,\tilde{a}|x)$ is identifiable.
\end{prop}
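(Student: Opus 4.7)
The plan is to reduce to Proposition~\ref{prop:ident1} by first identifying $\tau=(\tau_1,\tau_2)$ from the marginal distribution of $(\tilde{A}_1,\tilde{A}_2)$ given $X$ alone, and then applying Proposition~\ref{prop:ident1} to a model marginalized down to a single error-prone observation.

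First, I would marginalize out $Y$ from the joint and use both the assumed conditional independence $\tilde{A}_1 \perp \tilde{A}_2 \mid A$ and the strict underreporting structure $p(\tilde{a}_j=1\mid a=0)=0$ to write the three nontrivial cells of the $2\times 2$ contingency table for $(\tilde{A}_1,\tilde{A}_2)$ given $x$ in closed form:
\begin{align*}
p(1,1\mid x) &= p_\phi(1\mid x)(1-\tau_1)(1-\tau_2),\\
p(1,0\mid x) &= p_\phi(1\mid x)(1-\tau_1)\tau_2,\\
p(0,1\mid x) &= p_\phi(1\mid x)\tau_1(1-\tau_2).
\end{align*}
The ratios $p(1,0\mid x)/(p(1,1\mid x)+p(1,0\mid x))=\tau_2$ and $p(0,1\mid x)/(p(1,1\mid x)+p(0,1\mid x))=\tau_1$ cancel the unknown $p_\phi(1\mid x)$ and identify $\tau$ directly from the observed marginal, provided there exists some $x$ for which $p_\phi(1\mid x)(1-\tau_1)(1-\tau_2)>0$.

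Second, with $\tau_1$ now pinned down, I would marginalize $\tilde{A}_2$ out of the full joint to obtain $p(y,\tilde{a}_1\mid x)=\sum_a p_{\tau_1}(\tilde{a}_1\mid a)p_\phi(a\mid x)p_\theta(y\mid a,x),$ which is exactly the single-measurement model of Proposition~\ref{prop:ident1} with $\tau_1$ treated as known. Applying Proposition~\ref{prop:ident1} then yields identifiability of $(\phi,\theta)$, and combined with the previous step this gives identifiability of the full joint $p_{\tau,\phi,\theta}(y,\tilde{a}\mid x)$.

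The main obstacle I expect is treating the boundary cases cleanly: if $\tau_j=1$ for some $j$, then $\tilde{A}_j$ is identically zero and contributes no information, collapsing the problem back to the single-observation setting of Section~\ref{sec:model} in which $\tau$ is not identifiable. A reasonable resolution is to restrict the parameter space to $\tau_j<1$, since this degenerate case is directly detectable from the data. The nondegeneracy requirement that $p_\phi(1\mid x)>0$ for some $x$ should follow from the assumed identifiability of $p_{\phi,\theta}(y,a\mid x)$, since otherwise the $a=1$ branch of the outcome model is entirely unconstrained.
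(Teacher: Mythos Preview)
Your proposal is correct and follows essentially the same approach as the paper: both arguments use the three nontrivial cells of the $(\tilde{A}_1,\tilde{A}_2)$ table conditional on $x$ to pin down $\tau_1,\tau_2$ (the paper phrases this as a contradiction argument with ``some algebra'' on exactly the same three equalities, while you write out the identifying ratios explicitly), and then both invoke Proposition~\ref{prop:ident1} to recover $(\phi,\theta)$. Your explicit treatment of the boundary cases $\tau_j=1$ and $p_\phi(1\mid x)=0$ is, if anything, more careful than the paper's, which implicitly assumes nondegeneracy when dividing.
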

\begin{proof}
	Assume for contradiction that the above condition holds and there exists $(\tau,\phi,\theta) \neq (\tau',\phi',\theta')$ such that $p_{\tau,\phi,\theta}(y,\tilde{a}|x) = p_{\tau',\phi',\theta'}(y,\tilde{a}|x)$ (i.e. $p_{\tau,\phi,\theta}(y,\tilde{a}|x)$ is not identifiable). Then the following equalities must hold:
\begin{align*}
	(1-\tau_1)(1-\tau_2)\pi_\phi(x) &= (1-\tau'_1)(1-\tau'_2)\pi_{\phi'}(x)\\
	\tau_1(1-\tau_2)\pi_\phi(x) &= \tau'_1(1-\tau'_2)\pi_{\phi'}(x)\\
	\tau_2(1-\tau_1) \pi_\phi(x) &= \tau'_2(1-\tau'_1)\pi_{\phi'}(x)
\end{align*}
where $\pi_\phi(x) = p_\phi(A=1|x)$. By applying some algebra to these equalities, we have that $\tau_1=\tau'_1$ and $\tau_2=\tau'_2$. Finally, by Proposition \ref{prop:ident1}, we have that $(\phi,\theta) = (\phi',\theta')$, which is a contradiction.
\end{proof}
%
Intuitively, we know there are no false positives, so the independence assumption $\tilde{A}_1 \perp \tilde{A}_2 | A$ allows us to estimate $\tau_1$ as the proportion of samples where $\tilde{A}_1 = 0$ among the samples where $\tilde{A}_2=1$ and vice versa.

\subsection{Single error-prone observation}
\label{sec:single}
Finally, we consider the most difficult case: when we only have access to a single error-prone observation and no knowledge of $\tau$. In this case, we can only guarantee identifiability of $p_{\tau,\phi,\theta}(y,\tilde{a}|x)$ by making further assumptions about the structure of the model. In Theorem \ref{thm:ident2}, we prove one such condition which constrains the structure of the propensity model $p_\phi(a|x)$. Further, we prove in Corollary \ref{cor:cor1} that three of the most common Bernoulli regression models meet this condition, allowing us to use these models to estimate $\tau$, $\phi$, and $\theta$ from data containing only a single error-prone observation.
\begin{theorem}
	\label{thm:ident2}
	If $p_{\phi,\theta}(y,a|x)$ is identifiable and for all $\alpha \in [0,1)$ and $\phi,\phi'\in\Phi$, there exists $x$ such that $p_\phi(A=1|x) \neq \alpha p_{\phi'}(A=1|x)$, then $p_{\tau,\phi,\theta}(y,\tilde{a}|x)$ is identifiable.
\end{theorem}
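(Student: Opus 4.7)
The plan is to argue by contradiction in the spirit of Proposition \ref{prop:ident3}. Suppose that there exist $(\tau,\phi,\theta)\neq(\tau',\phi',\theta')$, with $\tau,\tau'\in[0,1)$, yielding identical observables $p_{\tau,\phi,\theta}(y,\tilde{a}|x)=p_{\tau',\phi',\theta'}(y,\tilde{a}|x)$. My goal is first to pin down $\tau=\tau'$ and $p_\phi=p_{\phi'}$ using the structural hypothesis on $\Phi$, and then to reduce to Proposition \ref{prop:ident1} to contradict the inequality on $(\phi,\theta)$.

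For the first step, I marginalize both joint distributions over $y$ and evaluate at $\tilde{a}=1$. Strict underreporting gives $p(\tilde{a}=1|x)=(1-\tau)\pi_\phi(x)=(1-\tau')\pi_{\phi'}(x)$ for every $x$, where $\pi_\phi(x)=p_\phi(A=1|x)$. Setting $\alpha=(1-\tau')/(1-\tau)>0$, this says $\pi_\phi(x)=\alpha\pi_{\phi'}(x)$ identically in $x$. I then split on cases: if $\alpha\in[0,1)$, the theorem's hypothesis directly produces some $x$ with $\pi_\phi(x)\neq\alpha\pi_{\phi'}(x)$, a contradiction; if $\alpha>1$, rewrite as $\pi_{\phi'}(x)=(1/\alpha)\pi_\phi(x)$ with $1/\alpha\in(0,1)$ and apply the hypothesis with the roles of $\phi$ and $\phi'$ swapped, obtaining the same contradiction. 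Only $\alpha=1$ survives, which forces $\tau=\tau'$ and $\pi_\phi\equiv\pi_{\phi'}$, hence $p_\phi(a|x)=p_{\phi'}(a|x)$ for all $a,x$.

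With $\tau=\tau'$ in hand, the remaining identity $p_{\tau,\phi,\theta}(y,\tilde{a}|x)=p_{\tau,\phi',\theta'}(y,\tilde{a}|x)$ is exactly the premise of Proposition \ref{prop:ident1}: marginalization over $A$ at a fixed $\tau<1$ is an invertible linear map on the two values of $\tilde{a}$, so inverting it yields $p_{\phi,\theta}(y,a|x)=p_{\phi',\theta'}(y,a|x)$ for all $(y,a,x)$. Identifiability of $p_{\phi,\theta}(y,a|x)$ then forces $(\phi,\theta)=(\phi',\theta')$, contradicting the initial assumption.

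The whole substance of the theorem, and hence the main obstacle, is the step that rules out $\alpha\neq 1$. The product $(1-\tau)\pi_\phi(x)$ is the only feature of the observed marginal in which $\tau$ and $\phi$ are entangled, so without some rigidity on the family $\{\pi_\phi\}_{\phi\in\Phi}$ we can never disentangle the underreporting rate from the true exposure probability; the hypothesis is designed precisely to exclude every positive rescaling of one $\pi_{\phi'}$ by another $\pi_\phi$. Once that scalar ambiguity is eliminated, the rest of the argument is mechanical, and verifying the hypothesis for standard Bernoulli link functions can be postponed to Corollary \ref{cor:cor1}.
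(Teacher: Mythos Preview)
Your proof is correct and follows the same core strategy as the paper's: use the strict-underreporting structure at $\tilde a=1$ to obtain the scaling relation $(1-\tau)\pi_\phi(x)=(1-\tau')\pi_{\phi'}(x)$, then invoke the hypothesis to force $\tau=\tau'$, and finish via identifiability of $p_{\phi,\theta}(y,a|x)$.

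The only real difference is how you reach the scaling relation. You marginalize over $y$ immediately, which gives $(1-\tau)\pi_\phi(x)=(1-\tau')\pi_{\phi'}(x)$ in one line, and then defer everything about $\theta$ to a clean invocation of Proposition~\ref{prop:ident1}. The paper instead keeps $y$ in the equation $(1-\tau)\pi_\phi(x)p_\theta(y|A{=}1,x)=(1-\tau')\pi_{\phi'}(x)p_{\theta'}(y|A{=}1,x)$ and uses a sum-to-one argument to peel off the $\theta$ factors before arriving at the same scaling relation. Your route is slightly more economical and makes the modular dependence on Proposition~\ref{prop:ident1} explicit; the paper's route has the minor side benefit of exhibiting $p_\theta(y|A{=}1,x)=p_{\theta'}(y|A{=}1,x)$ directly along the way. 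Your explicit handling of both cases $\alpha<1$ and $\alpha>1$ by swapping the roles of $\phi,\phi'$ is also a bit more careful than the paper's ``without loss of generality'' reduction.
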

\begin{proof}
Assume for contradiction that the above condition holds and there exists $(\tau,\phi,\theta) \neq (\tau',\phi',\theta')$ such that $p_{\tau,\phi,\theta}(y,\tilde{a}|x) = p_{\tau',\phi',\theta'}(y,\tilde{a}|x)$ (i.e. $p_{\tau,\phi,\theta}(y,\tilde{a}|x)$ is not identifiable). By assumption, we have
\begin{align*}
	&(1-\tau)p_{\phi}(A=1|x)p_{\theta}(y|A=1,x)\\&=  (1-\tau')p_{\phi'}(A=1|x)p_{\theta'}(y|A=1,x)
\end{align*}
If $p_{\theta}(y|A=1,x) \neq p_{\theta'}(y|A=1,x)$, then $\frac{(1-\tau)p_{\phi}(A=1|x)}{(1-\tau')p_{\phi'}(A=1|x)} \neq 1$ and as a result $p_{\theta'}(y|A=1,x)$ does not sum to one. Therefore, $p_{\theta}(y|A=1,x) = p_{\theta'}(y|A=1,x)$ and
\begin{align*}
	(1-\tau)p_{\phi}(A=1|x) = (1-\tau')p_{\phi'}(A=1|x)
\end{align*}
Without loss of generality, assume that $\tau < \tau'$. Then, $p_{\phi}(A=1|x) =  \alpha p_{\phi}(A=1|x)$ where $\alpha = \frac{1-\tau'}{1-\tau} \in [0,1)$ which is a contradiction.
\end{proof}
This result is somewhat technical, but intuitively, it says that if there exists $\phi$, $\phi'$, and $\alpha\in[0,1)$ such that $p_\phi(A=1|x) = \alpha p_{\phi'}(A=1|x)$ then it will be impossible to distinguish $(\alpha\tau,\phi)$ from $(\tau,\phi')$ given the observed data. Fortunately, this condition holds for three common Bernoulli regression models.
\begin{corollary}
	\label{cor:cor1}
	If $A \not\perp X$, then a Bernoulli regression model $p_\phi(A=1|x) = \Psi^{-1}(\phi x)$ with a logit, probit, or complementary log-log (cloglog) link function $\Psi$ satisfies the identifiability condition in Theorem \ref{thm:ident2}.
\end{corollary}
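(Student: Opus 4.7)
My plan is to prove the contrapositive of the identifiability condition by contradiction. Suppose, for some $\alpha\in[0,1)$ and some $\phi,\phi'\in\Phi$, we had $\Psi^{-1}(\phi x)=\alpha\Psi^{-1}(\phi' x)$ for every $x$. Since every inverse link here maps into $(0,1)$, the right-hand side is uniformly bounded above by $\alpha<1$, hence $\Psi^{-1}(\phi x)\le\alpha$ for all $x$. The goal is then to show that this uniform bound is incompatible with the assumption $A\not\perp X$.

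The key observation I would use is that for each of the three links, $\Psi^{-1}$ is a strictly increasing bijection from $\mathbb{R}$ onto $(0,1)$ with $\lim_{z\to\infty}\Psi^{-1}(z)=1$. Concretely: $\Psi^{-1}(z)=(1+e^{-z})^{-1}$ for logit, $\Psi^{-1}(z)=\Phi_{\mathcal{N}}(z)$ for probit, and $\Psi^{-1}(z)=1-\exp(-\exp(z))$ for cloglog all have this property (the cloglog case just uses $1-e^{-e^z}\to 1$ as $z\to\infty$). Consequently, the bound $\Psi^{-1}(\phi x)\le\alpha$ is equivalent to $\phi x\le\Psi(\alpha)$ for all $x$, i.e., the linear predictor $\phi x$ is bounded above over the entire covariate space.

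The final step is to deduce from boundedness of $\phi x$ that the non-intercept coefficients of $\phi$ must all vanish: if any slope coefficient were nonzero, one could drive $x$ along a suitable ray to make $\phi x\to+\infty$. Hence $\phi x$ is in fact constant in $x$, which forces $p_\phi(A=1|x)=\Psi^{-1}(\phi x)$ to be constant in $x$ as well, contradicting the standing assumption $A\not\perp X$ on $\phi\in\Phi$. This contradiction establishes that the hypothesis of Theorem~\ref{thm:ident2} is satisfied, completing the corollary.

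I don't foresee a real obstacle here: the only care needed is to verify uniformly across the three link families that $\Psi^{-1}$ is strictly increasing with supremum $1$, and to make the standard observation that a linear function on $\mathbb{R}^d$ is bounded above iff its slope part is zero. The whole argument is essentially one short chain: boundedness of $\Psi^{-1}(\phi x)$ strictly below $1$ forces $\phi x$ bounded above, which forces $\phi$'s slopes to be zero, which contradicts $A\not\perp X$.
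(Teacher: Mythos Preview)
Your argument is correct and reaches the same conclusion as the paper, but by a different and cleaner route. The paper writes the negation of the condition as $\Psi(\alpha\Psi^{-1}(\phi x))=\phi' x$ and then, separately for each of the three links, computes this composition explicitly (e.g., $\log(\alpha)-\log(1-\alpha+e^{-\phi x})$ for logit) and argues that the resulting function is nonlinear in $\phi x$, so it cannot equal the linear function $\phi' x$ unless both $\phi x$ and $\phi' x$ are constant. Your approach bypasses the link-by-link computation entirely: you use only that each $\Psi^{-1}$ is a strictly increasing bijection onto $(0,1)$, so $\alpha\Psi^{-1}(\phi' x)\le\alpha<1$ immediately bounds $\Psi^{-1}(\phi x)$ strictly below $1$, hence bounds the linear predictor $\phi x$ from above, hence forces the slopes of $\phi$ to vanish. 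This is more uniform (one argument handles all three links, and indeed any link whose inverse surjects onto $(0,1)$) and more elementary. The paper's version has the minor advantage of also showing $\phi' x$ must be constant, though that is not needed for the contradiction with $A\not\perp X$. Both arguments share the same implicit premises: that $x$ ranges over an unbounded domain (so a bounded-above affine function must have zero slopes) and that the hypothesis $A\not\perp X$ is read as a restriction on every $\phi\in\Phi$.
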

\begin{proof}
	A Bernoulli regression model of the form $p_\phi(A=1|x) = \Psi^{-1}(\phi x)$ violates the condition in Theorem \ref{thm:ident2} if and only if there exists $\alpha \in [0,1)$, $\phi$, and $\phi'$ such that
\begin{align*}
	\Psi(\alpha \Psi^{-1}(\phi x)) = \phi' x
\end{align*}
For logistic regression, this is true if 
\begin{align*}
\log(\alpha) - \log(1-\alpha + e^{-\phi x}) = \phi' x
\end{align*}
For all $\alpha < 1$, the function $\log(1-\alpha + e^{-x})$ is non-linear in $x$, so this equality can only be true if $\phi x$ and $\phi' x$ are constants which is true only when $A \perp X$. The same argument can be applied to the probit and complementary log-log link functions (see the supplementary materials for details).
\end{proof}
These three results allow us to estimate the outcome model directly from the observed data in a variety of scenarios. Which result is most applicable will depend on the particular modeling problem. For example, if $\tau$ is known based on previous literature, then we should use Proposition \ref{prop:ident1} as this result makes the fewest assumptions about the structure of the model. Equipped with these results, we apply the full likelihood approach to estimation problems using both synthetic and real data.

\section{Synthetic experiments}
\label{sec:synth}
\begin{figure*}[t!]
    \centering
    \subfigure[]{
		\includegraphics[width=0.31\textwidth]{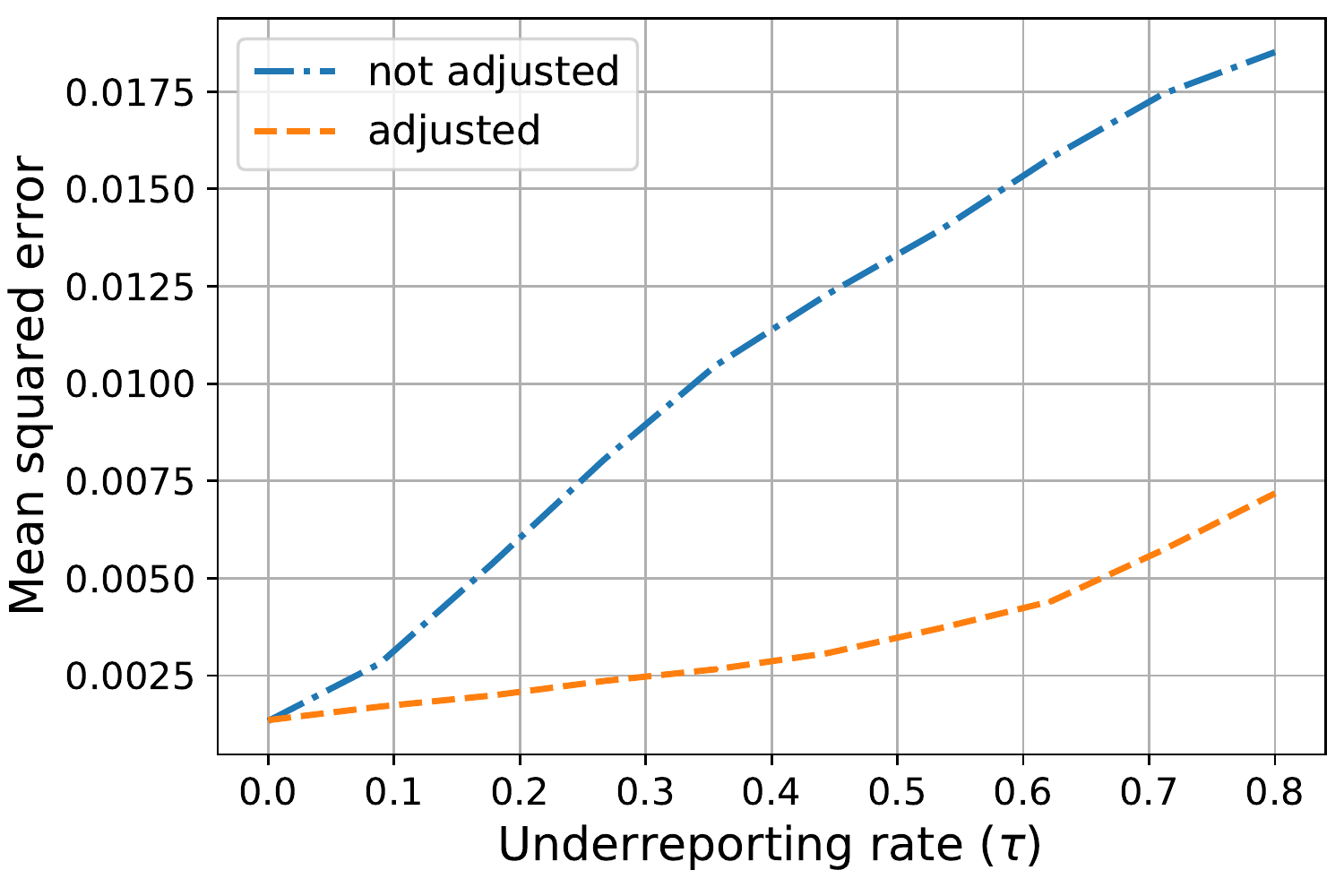}
	}
    \subfigure[]{
		\includegraphics[width=0.31\textwidth]{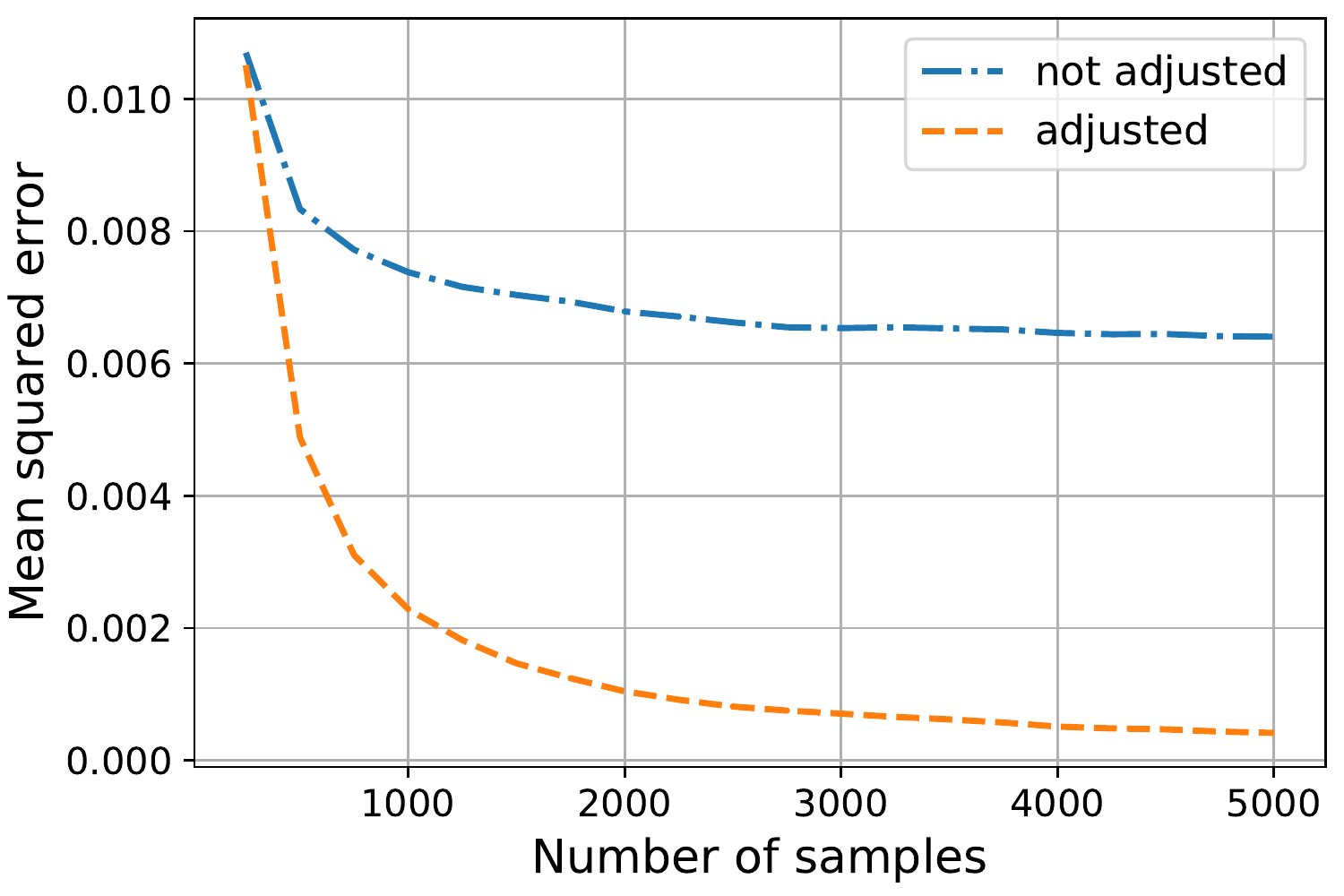}
    }
    \subfigure[]{
		\includegraphics[width=0.31\textwidth]{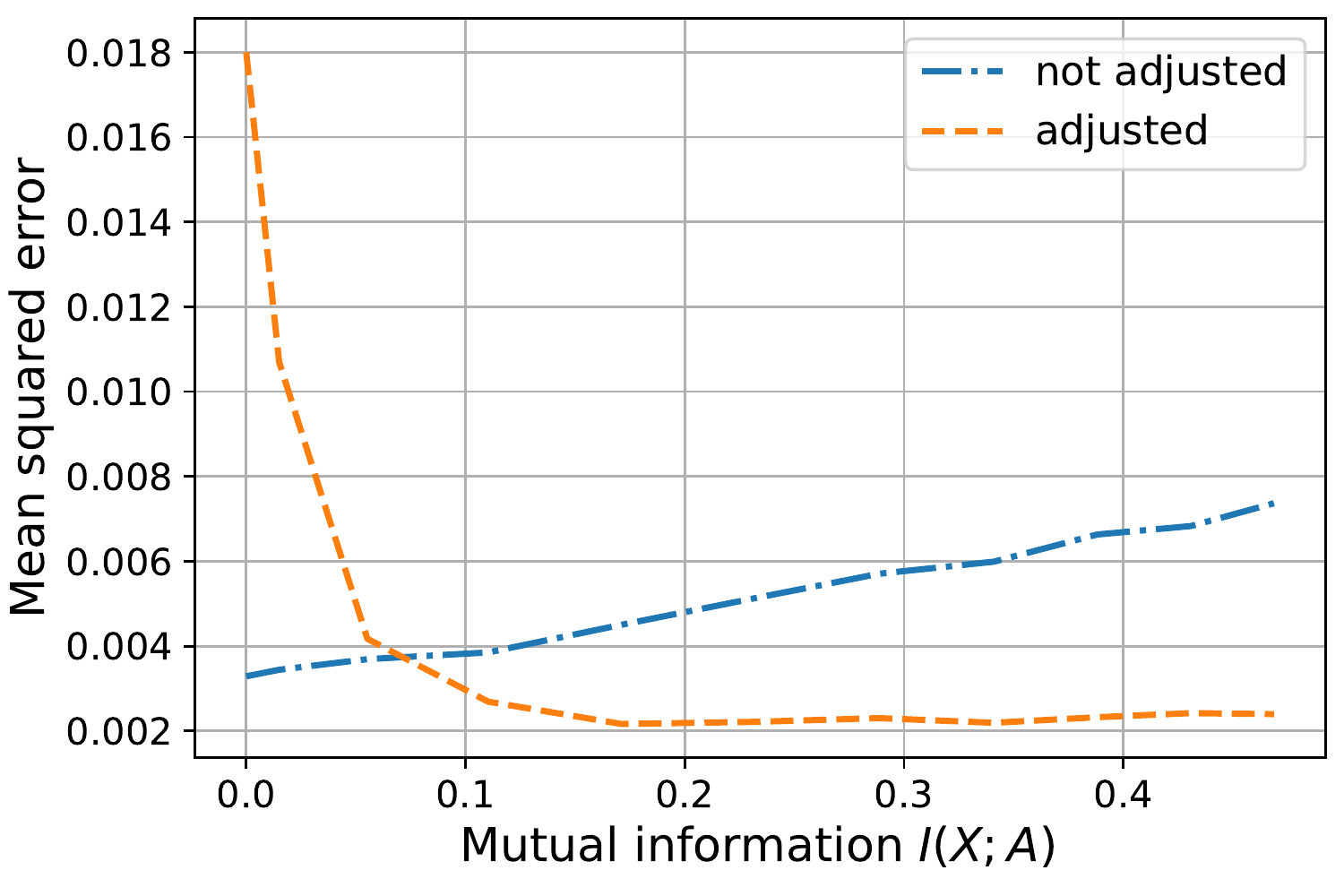}
	}
    \caption{Experimental results on synthetic data. Data was generated from the model described in Equation \ref{eq:synth} with specific parameters varied in each experiment. All plots compare the MSE of the proposed method to a model fit under the assumption that there is no measurement error. In order, the plots show MSE as a function of (a) the true underreporting rate, (b) the sample size, and (c) the mutual information between $A$ and $X$.}\label{fig:synth}
\end{figure*}

We conducted a series of synthetic experiments to demonstrate the behavior of the full likelihood approach based on Theorem \ref{thm:ident2} under varied data conditions and to test the sensitivity of this approach to near violations of the identifiability conditions. Unless otherwise stated, synthetic data was generated using the following model:
\begin{align}
	\nonumber
	X_i &\sim \mathcal{N}(\mathbf{0},\mathbf{I})\\
	\nonumber
	A_i | X_i &\sim \text{Bern}(\text{expit}(\phi_0 + \phi X_i))\\
	\label{eq:synth}
	Z_i &\sim \text{Bern}(1-\tau)\\
	\nonumber
	\tilde{A}_i &= Z_i A_i\\
	\nonumber
	Y_i | A_i, X_i &\sim \text{Bern}(\text{expit}(\theta_0 + \theta_X X_i + \theta_A A_i))
\end{align}
where $\phi$ and $\theta$ were both sampled from a standard normal and $\theta_A$ was set to $1.0$. For any $\theta_X \neq \mathbf{0}$, this data generating process satisfies the identifiability condition of Corollary \ref{cor:cor1}, so the parameters should be identifiable using only a single error-prone observation of the exposure. We evaluated a version of the full likelihood method based on Theorem \ref{thm:ident2} where both $p_\phi(A|X)$ and $p_\theta(Y|A,X)$ were logistic regression models and we maximized the conditional likelihood in Equation \ref{eq:lml} using L-BFGS ("adjusted"). We compared against a logistic regression model estimated under the assumption of no observation error, that is $\tilde{A} = A$ ("not adjusted"). 

In all experiments the target estimand was the risk difference (RD)
%
%
which we estimated as\footnote{Under the assumptions of consistency, positivity, and conditional exchangeability, the risk difference can be interpreted as a measure of the causal effect of $A$ on $Y$ \cite{hernan2018causal}.}:
\begin{align}
	\label{eq:rd_est}
	RD \approx \frac{1}{N}\sum_{i=1}^{N}\,\, &p_{\hat{\theta}}(Y=1|A=1,X=x_i)\\ 
	\nonumber
	&- p_{\hat{\theta}}(Y=1|A=0,X=x_i)
\end{align}
where $\hat{\theta}$ is the estimated value of $\theta$.

While adjusting for measurement error may reduce estimator bias, it may also increase variance\footnote{Adjusting for measurement error in the observations reduces the effective sample size.}. In this way, adjusting for noise in the observation process can be viewed a bias/variance trade-off. Accordingly, we compare the two approaches in terms of mean squared error (MSE) which allows us to judge whether adjusting for measurement errors makes a favorable bias/variance trade-off. Our expectation is that adjusting for measurement error bias leads to reduced estimation error in all cases where the modeling assumptions are satisfied. In all cases, MSE was approximated by averaging over 1,000 synthetic datasets.

In our first experiment (Figure \ref{fig:synth} (a)), we evaluated both approaches under different true underreporting rates by varying $\tau$ between $0$ and $0.8$ while keeping the dataset size fixed at $1,000$. In all cases, adjusting for measurement error reduces MSE, with the gap in MSE increasing as the true underreporting rate increases. This reflects the intuitive idea that less measurement error results in less bias and therefore, there is less to be gained by adjusting for this bias. 

In our second experiment (Figure \ref{fig:synth} (b)), we varied the dataset size between $250$ and $5,000$ while keeping $\tau$ fixed at $0.25$. For small datasets, where we expect variance to swamp bias, both approaches result in comparable MSE; however, as the data size increases, the adjusted estimate converges to zero MSE while the unadjusted estimate converges to a fixed bias and the gap in MSE increases. This result demonstrates the added importance of adjusting for bias when using large datasets so that we do not become overly confident in the wrong inference.

In our third experiment (Figure \ref{fig:synth} (c)), we tested the sensitivity to the assumption in Corollary \ref{cor:cor1} by evaluating how the MSE of the adjusted model increases as the dependence between $A$ and $X$ decreases. Figure \ref{fig:synth} (c) shows this dependence in terms of mutual information. In this experiment, we left $\tau$ fixed at $0.25$ and the data size fixed at $1,000$ while varying the magnitude of $\phi_X$. Specifically, we multiplied the weights $\phi_X$ by a scaler $c$ ranging from $0$ to $1$ ($\phi_0$ was left fixed so that $\mathbf{E}[A]$ remained constant). As we would expect based on Corollary \ref{cor:cor1}, the MSE of the adjusted model increases significantly as the mutual information between $X$ and $A$ approaches zero, with the unadjusted model performing better for some mutual information values greater than zero. An important implication of this result is that if we see a large difference in the estimated variance for the full likelihood and unadjusted approaches, this may be an indication of a near violation of the identifiability conditions. In the next section, we apply the full likelihood method to an effect estimation problem from public health.



\section{Boston birth cohort}
\label{sec:bbc}
\begin{figure*}[t!]
    \centering
    \subfigure[]{
		\includegraphics[width=\columnwidth]{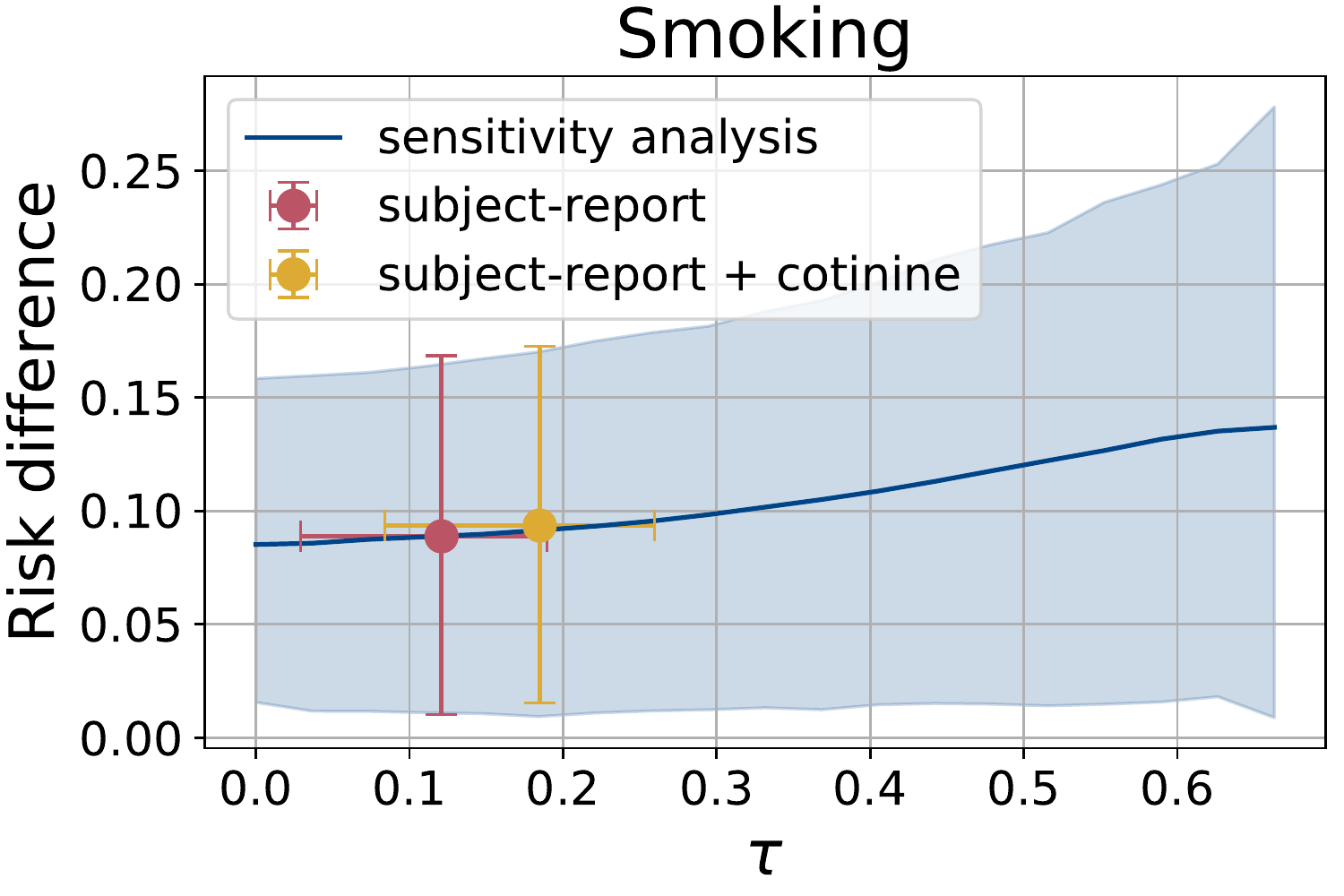}
	}
    \subfigure[]{
		\includegraphics[width=\columnwidth]{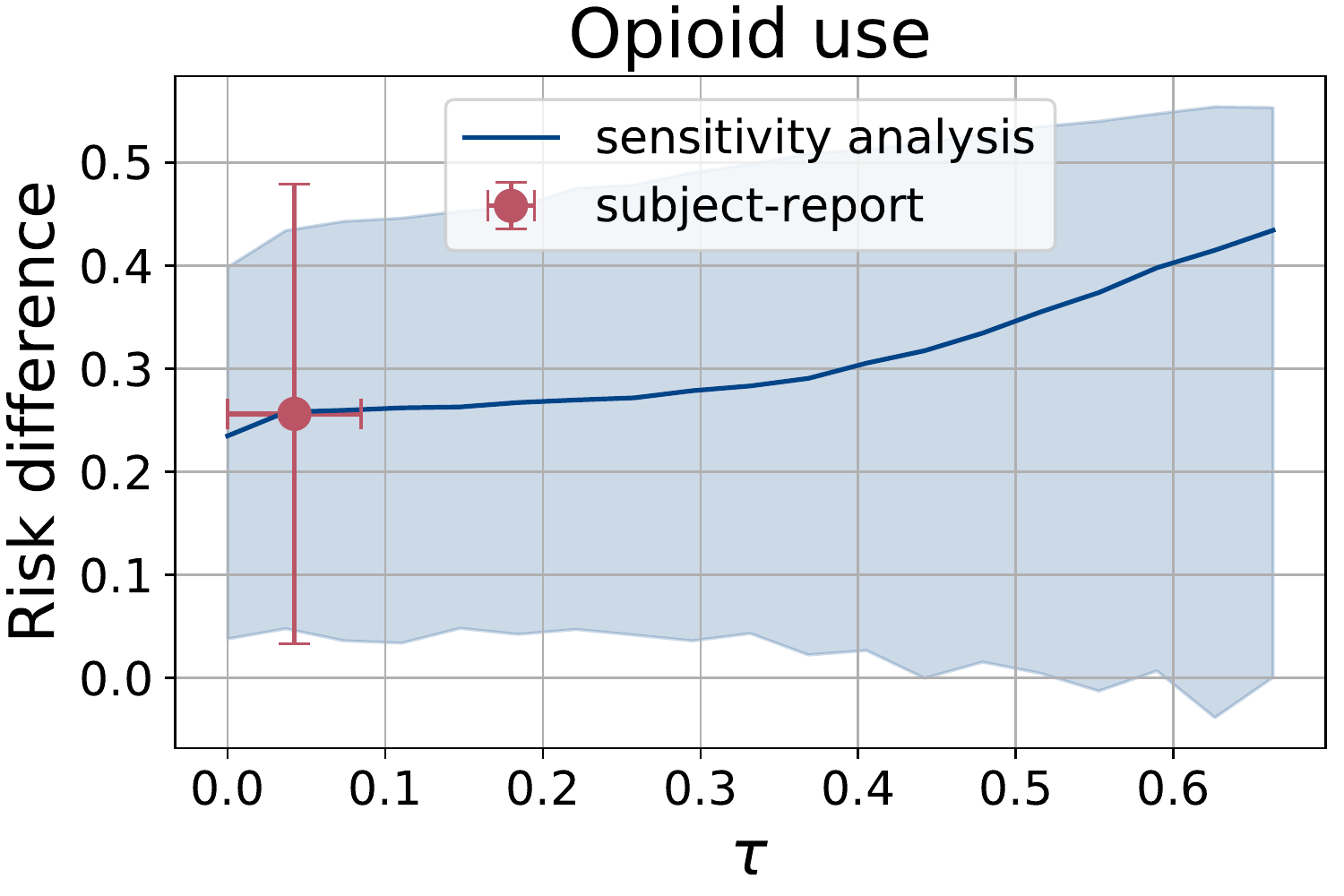}
    }
    \caption{Risk difference estimates for (a) smoking and (b) opioid use. The blue lines show the range of risk difference estimates generated using a sensitivity analysis approach with the shaded region indicating 95\% confidence intervals. Dots represent risk difference estimated generated using a full likelihood method with y-axis error bars indicating 95\% confidence intervals for the risk difference estimate and x-axis error bars indicating 95\% confidence intervals for the estimate of $\tau$.}\label{fig:bbc}
\end{figure*}

Childhood overweight and obesity (COWO) affects over 30\% of children in the United States \cite{ogden2014prevalence} and is associated with a variety of chronic adulthood diseases such as adult obesity \cite{suchindran2010association} and stroke \cite{lawlor2005association}. Understanding the early childhood and pre-birth exposures that lead to COWO is a critical step in developing interventions that may reduce COWO rates. One such potential exposure is maternal drug use during pregnancy, which has been linked to a number of adverse birth outcomes \cite{wang2002maternal,robison2012maternal,whiteman2014maternal}. 

In this section, we use the full likelihood method based on Theorem \ref{thm:ident2} to estimate the effects of smoking and opioid use on COWO. It is difficult to accurately measure drug use in large populations, so we are often forced to rely on subject-reported drug use, which is known to be unreliable \cite{patrick1994validity,gorber2009accuracy,boyd1998quality}. Both the effect of smoking on childhood obesity and the rate at which study subjects underreport smoking have previously been studied (e.g \citet{von2007parental,gorber2009accuracy}) which allows us check our estimates of these quantities agains existing literature. The effect of opioid use on childhood obesity has, to our knowledge, never been studied and we include it as a demonstration of our method on an important unanswered question.


\paragraph{Data:}
To estimate these effects, we use data from the Boston Birth Cohort, a longitudinal dataset tracking various health markers from mothers and children. For each mother/child pair $i$, the outcome $Y_i$ is equal to one if the most recent measurement of the child's body mass index (BMI) is above the 85th percentile, and the true exposure $A_i$ represents whether or not the mother used the substance in question during pregnancy. The complete dataset contains 8,507 mother/child pairs and we have child BMI measurements for 2,763 of those pairs. In the cases of both smoking and opioid use, we have access to subject-reported substance use indicators which we use as error-prone exposure measurements $\tilde{A}_i$. In the case of smoking, we also have measurements of blood cotinine levels (a nicotine metabolite) at the time of delivery for 1,333 of the mothers. As a second error-prone indicator of smoking during pregnancy, we used a binary variable equal to one if this cotinine measurement was available and above the 95th percentile. In both cases, we adjusted for the covariates $X_i$ given in \citet{wang2016weight}, which include (among others) age, income, and other substance use indicators.

\paragraph{Model:}
In all cases, we used logistic regression models for both $p_\phi(a|x)$ and $p_\theta(y|a,x)$. Using a logistic regression model, we estimated the mutual information between $X$ and $\tilde{A}$ at $0.17$ for smoking and $0.14$ for opioid use, which gives us some evidence that $A \not\perp X$, as required by Corollary \ref{cor:cor1}. We estimated $\tau$, $\phi$, and $\theta$ by maximizing the log conditional likelihood in Equation \ref{eq:lml} using L-BFGS. As in our synthetic experiments (Section \ref{sec:synth}) our target estimand was the risk difference, which we estimated according to Equation \ref{eq:rd_est}.

\paragraph{The effect of smoking on childhood obesity:}
When estimating the effect of smoking on childhood obesity, we compared three methods: a sensitivity analysis where $\tau$ was fixed at values ranging from $0$ and $0.65$, the full likelihood method using both subject-reported smoking and cotinine levels, and the full likelihood method using only subject-reported smoking. The resulting estimates along with $95\%$ confidence intervals are shown in Figure \ref{fig:bbc} (a). The sensitivity analysis approach gave a range of estimated risk differences between $0.085$ and $0.137$ while the two full likelihood methods gave similar estimates of $0.089$ and $0.094$. Additionally, the two joint estimation approaches gave estimates for the underreporting rate of $0.121$ and $0.185$.

Three previous studies on the effect of maternal smoking on childhood obesity reported adjusted odds ratios of $1.60$ \cite{von2007parental}, $1.43$ \cite{von2002maternal}, and $1.58$ \cite{toschke2002childhood}. The two full likelihood methods estimated the adjusted odds ratio at $1.51$ and $1.54$, fully in line with the previous literature. Further, a review of previous studies on underreporting rates for smoking found the majority of studies reported underreporting rates between $0.01$ and $0.47$ which is in line with our estimated underreporting rates of $0.121$ and $0.185$ \cite{gorber2009accuracy}.

\paragraph{The effect of opioid use on childhood obesity:}
When estimating the effect of maternal opioid use on childhood obesity, we compared a sensitivity analysis approach to the full likelihood method using subject-reported opioid use as an error-prone exposure measurement. The resulting estimates are shown in Figure \ref{fig:bbc} (b). The sensitivity analysis approach gave a range of estimated risk differences between $0.235$ and $0.434$ while the joint estimation approach estimated the risk difference at $0.256$ and the underreporting rate at $0.042$. 
Using the full likelihood method gives us evidence that the underreporting rate for opioid use is actually quite low and, accordingly, the effect of maternal opioid use on childhood obesity is on the lower end of the range given by the sensitivity analysis approach.
Additionally, at the upper end of the sensitivity analysis range, the $95\%$ confidence intervals grow to include zero. The full likelihood method, on the other hand, gives a confidence interval that excludes zero, giving us stronger evidence that an effect does, in fact, exist.

\section{Related work}
\label{sec:related}
In this section, we highlight two relevant lines of work. In machine learning, the full likelihood approach has been used to learn classifiers from data in which the target label is subject to measurement error. One of the common approaches to solving this problem is to treat the true label as a latent variable which is marginalized out of the model \cite{yan2010modeling, raykar2009supervised, jin2002learning, mnih2012learning}. Unlike the exposure misclassification problem, the goal in the label noise problem is generally prediction, so models are evaluated empirically in terms of prediction accuracy with no guarantees about model identifiability.

The full likelihood approach has also been applied to a problem in computational ecology called \emph{occupancy modeling} that shares important properties with the exposure underreporting problem. In the occupancy modeling problem the goal is to estimate the probability that a region is inhabited by a particular species given features of the region. Occupancy models are typically estimated from survey data which often shares the same strict underreporting property that we discussed in Section \ref{sec:model}. That is, if a species is observed, we assume the site is, in fact, occupied by the species. This problem differs from ours in that the 
measurement error is present in the target $Y$ rather than the exposure $A$, but the marginal likelihood approach used \citet{solymos2012conditional} is very similar to ours. In \citet{solymos2012conditional} and the resulting discussion \cite{knape2015estimates,solymos2016revisiting,knape2016assumptions} the authors present identifiability conditions for their model and the results presented in Section \ref{sec:identifiability} are extensions of these conditions to the problem of exposure misclassification.

\section{Discussion}
\label{sec:discussion}

Measurement error is widespread in observational data and can lead to inferential bias with severe real-world implications. We argued that performing quantitative bias analysis and adjustment should be a first order concern for researchers using observational data and we presented a new method to adjust for the bias caused by exposure underreporting. We showed that, under certain assumptions, this method can be used in a variety of scenarios, including when only a single error-prone exposure observation is available, a capability that did not exist before.

We demonstrated on synthetic data that this method can potentially reduce estimation error by a significant amount relative to ignoring measurement error, but may be sensitive to near violations of the modeling assumptions. Finally, we used this method to estimate the effects of maternal smoking and opioid use during pregnancy on childhood obesity. Our method refined the range of estimates given by a sensitivity analysis approach and, in the case of smoking, resulted in estimates matching previous literature. 

There are a two additional points about this method that we would like to highlight. First, we do not view our method as a replacement to traditional sensitivity analysis, but rather as a complementary approach that can be used to improve and refine the range of estimates generated by sensitivity analysis. Specifically, when performing bias analysis, we recommend generating the types of plots shown in Figure \ref{fig:bbc} which combine results from both full likelihood and sensitivity analysis approaches. By analyzing the data under multiple sets of assumptions, we make our results more robust to violations of any single assumption\footnote{This is the very principle underlying sensitivity analysis.}.

Second, while Theorem \ref{thm:ident2} allows us to use several common Bernoulli regression models it is worth considering which models are excluded by this condition. An example of one such model is a scaled logistic regression model of the form $p_\phi(A=1|x) = \alpha \text{expit}(\phi x)$ where $\alpha \in [0,1]$. 
In this model, $p_\phi(A=1|x)$ saturates at a value $\alpha \leq 1$ which allows us to control the maximum value of $p_\phi(A=1|x)$ independently from sharpness of the decision boundary. As suggested by \citet{solymos2016revisiting}, such a model may be approximated by using logistic regression paired with a polynomial basis expansion of the covariates. 
While this type of feature expansion does not violate the condition in Corollary \ref{cor:cor1}, it may increase estimator variance to the point where regularization becomes necessary to stabilize estimation. For a discussion this stabilization and further models that violate this condition, see \citet{knape2015estimates}, \citet{solymos2016revisiting}, and \citet{knape2016assumptions}. 


There are several potential future directions for this work. First, relaxing the conditional independence and strict underreporting assumptions from Section \ref{sec:model} would allow broader application of this method. For example, \citet{solymos2012conditional} suggest an alternative to Corollary \ref{cor:cor1} for the occupancy modeling problem that can be used when $\tilde{A} \not\perp X | A$. Similarly, while we think it unlikely that we can eliminate all assumptions about the error distribution, generalizing the strict underreporting assumption to allow for false positives is necessary for many potential applications. Second, \citet{rothman2008modern} suggest using semi-bayesian methods instead of sensitivity analysis to generate a distribution over inferences. In a similar vein, a fully Bayesian version of the approach presented in this paper would allow the modeler to refine the sensitivity analysis approach in a principled way while potentially stabilizing the full likelihood approach in the case of near violations of the identifiability conditions.


\bibliography{references}
\bibliographystyle{icml2019}
\clearpage
\appendix
\section{Proof of Corollary \ref{cor:cor1} for probit and cloglog models}
In this section we prove Corollary \ref{cor:cor1} in Section \ref{sec:single} of the main paper for the cases of probit and cloglog regression.

\begin{proof}
Following the proof for logistic regression, a probit regression model violates the condition in Theorem \ref{thm:ident2} if

\begin{align*}
	\sqrt{2}\,\text{erf}^{-1}\left(\alpha - 1 + \alpha\,\text{erf}\left(\frac{\phi x}{\sqrt{2}}\right)\right) = \phi' x
\end{align*}

Similarly, a cloglog regression model violates the condition in Theorem 1 if 

\begin{align*}
	\log(-\log(1-\alpha+\alpha\exp(-\exp(\phi x)))) = \phi' x
\end{align*}

In both of these cases, the function on the left hand side is non-linear for $\alpha < 1$, so these equalities can only be true if $\phi x$ and $\phi' x$ are constants which is true only when $A \perp X$.
\end{proof}


\end{document}


\twocolumn[
\icmltitle{Supplementary Materials for Learning Models from Data with Measurement Error: Tackling Underreporting}



\icmlsetsymbol{equal}{*}

\begin{icmlauthorlist}
\icmlauthor{Roy Adams}{jhu}
\icmlauthor{Yuelong Ji}{jhu}
\icmlauthor{Xiaobin Wang}{jhu}
\icmlauthor{Suchi Saria}{jhu}
\end{icmlauthorlist}

\icmlaffiliation{jhu}{Johns Hopkins University, Baltimore, MD}

\icmlcorrespondingauthor{Roy Adams}{roy.james.adams@gmail.com}

\icmlkeywords{measurement error; label noise; weakly supervised learning}

\vskip 0.3in
]



\printAffiliationsAndNotice{}  

\section{Proof of Corollary 1 for probit and cloglog models}
In this section we prove Corollary 1 in Section 4.3 of the main paper for the cases of probit and cloglog regression.

\begin{proof}
Following the proof for logistic regression, a probit regression model violates the condition in Theorem 1 if

\begin{align*}
	\sqrt{2}\,\text{erf}^{-1}\left(\alpha - 1 + \alpha\,\text{erf}\left(\frac{\phi x}{\sqrt{2}}\right)\right) = \phi' x
\end{align*}

Similarly, a cloglog regression model violates the condition in Theorem 1 if 

\begin{align*}
	\log(-\log(1-\alpha+\alpha\exp(-\exp(\phi x)))) = \phi' x
\end{align*}

In both of these cases, the function on the left hand side is non-linear for $\alpha < 1$, so these equalities can only be true if $\phi x$ and $\phi' x$ are constants which is true only when $A \perp X$.
\end{proof}

\bibliography{references}
\bibliographystyle{icml2019}
